\newtheorem{theorem}{Theorem}
\newtheorem{definition}{Definition}
\newtheorem{corollary}{Corollary}[theorem]
\newtheorem{remark}{Remark}
\newtheorem{lemma}{Lemma}
\newtheorem{proposition}{Proposition}
\DeclareMathOperator{\co}{co}
\DeclareMathOperator*{\argmin}{arg\,min}
\newcommand{\norm}[1]{\left\lVert#1\right\rVert}
\title{\LARGE \bf
Robust Barrier Functions for a Fully Autonomous, Remotely Accessible Swarm-Robotics Testbed
}
\author{Yousef Emam$^{1}$, Paul Glotfelter$^{2}$ and Magnus Egerstedt$^{3}$
  \thanks{This research was sponsored by NSF Awards CNS-1446891.}
  \thanks{Yousef Emam, Paul Glotfelter, and Magnus Egerstedt are with the Institute for Robotics and Intelligent Machines, Georgia
    Institute of Technology, Atlanta, GA 30332, USA,
    \{yemam3, paul.glotfelter, magnus\}@gatech.edu.}%
}
\begin{document}

\maketitle
\thispagestyle{empty}
\pagestyle{empty}

\begin{abstract}
    The Robotarium, a remotely accessible swarm-robotics testbed, has provided free, open access to robotics and controls research for hundreds of users in thousands of experiments.  This high level of usage requires autonomy in the system, which mainly corresponds to constraint satisfaction in the context of users' submissions.  In other words, in case that the users' inputs to the robots may lead to collisions, these inputs must be altered to avoid these collisions automatically.  However, these alterations must be minimal so as to preserve the users' objective in the experiment. Toward this end, the system has utilized barrier functions, which admit a minimally invasive controller-synthesis procedure.  However, barrier functions are yet to be robustified with respect to unmodeled disturbances (e.g., wheel slip or packet loss) in a manner conducive to real-time synthesis.  As such, this paper formulates robust barrier functions for a general class of disturbed control-affine systems that, in turn, is key for the Robotarium to operate fully autonomously (i.e., without human supervision).  Experimental results showcase the effectiveness of this robust formulation in a long-term experiment in the Robotarium.
\end{abstract}

\section{Introduction} 
\label{sec:introduction}


The Robotarium, a remotely accessible swarm-robotics testbed, located at the Georgia Institute of Technology, provides free, open access to a large number of differential-drive robots \cite{pickem2017robotarium}, alleviating the cost, in terms of both time and money, of starting a robotics testbed. To date, hundreds of people have utilized the Robotarium for over thousands of experiments.  In turn, this high level of usage necessarily requires the automatic and continuous execution of these experiments, as manually executing this large number of experiments is infeasible.

For the system to be autonomous, it must be endowed with the capabilities to enforce constraints in the context of users' submissions. For example, the robots must avoid inter-robot collisions. However, since the primary goal of the Robotarium is to enable controls research for a wide class of users, which may include nontraditional controls researchers (e.g., biologists), these constraints must be enforced in a manner that guarantees safety but also minimally interferes with users' experiments.

Toward this end, the Robotarium has extensively utilized barrier functions, as they are amenable to controller synthesis \cite{ames2014, xu2015, xu2018, wang2016, wang2016-2, PaulNBF, glotfelter2018}.  In particular, they do not encode objectives, such as may be the case for Lyapunov or potential functions (e.g., \cite{rimon1992}), which results in a minimally invasive formulation.  Formally, barrier functions satisfy constraints by guaranteeing forward invariance of a particular set that represents the constraint (e.g., the robots must always remain in the testbed).

The Robotarium has successfully executed thousands of experiments while using barrier functions to enforce constraints \cite{pickem2017robotarium}; however, uncertainty stemming from unmodeled behavior, such as packet loss or wheel slip, poses a continuing issue by preventing fully autonomous operation (i.e., with no human supervision).  For example, in its current state, a human operator must manually flag an experiment for re-execution if collisions occur.  As such, the Robotarium has a need for a formulation of barrier functions that can account for disturbances in an efficient manner.

Prior work on barrier functions has mainly addressed smooth barrier functions, formulating the associated forward-invariance results with respect to continuous dynamical systems.  Moreover, some work has focused on robust barrier functions for uncertain systems \cite{nguyen2016optimal, gurriet2018invariance}.  However, as formulated in these works, accounting for the disturbance in real-time controller synthesis involves solving a nonlinear optimization program, making the approach too costly for the Robotarium, which must quickly synthesize controllers for large groups of robots (e.g., at $60$~Hz for $40$ robots).  Since disturbed control systems may be represented by differential inclusions, the publications \cite{PaulNBF, glotfelter2018}, which utilize differential inclusions to develop a class of nonsmooth barrier functions, relate to this work on a theoretical level.  However, these results (e.g., \cite{PaulNBF}) have not been specialized for disturbed systems.

As such, this work extends barrier functions to disturbed control systems, resulting in robust Control Barrier Functions (CBFs).  To do so, this work utilizes some results from \cite{PaulNBF} by reformulating them in the context of disturbed control systems.  Specifically, this work addresses a particular class of disturbed control affine systems for which general disturbances may be addressed with linear computational complexity.  Additionally, we show that this class of robust barrier functions admits a controller-synthesis procedure via Quadratic Program (QP), ensuring that it can be run in real time, even on resource-limited systems.  Subsequently, this work specifically examines the case in which the control system is a nonlinear differential-drive robot, for usage in the Robotarium, and specializes the robust CBF framework to this context. This paper also presents a long-term experiment that demonstrates the increased autonomy afforded by the robust formulation.

The paper is organized as follows. Section~\ref{sec:background} introduces the necessary background material and notation for the paper.  In Section~\ref{sec:barrier-functions-for-disturbed}, we model the disturbances as the convex hull of a finite set of points and derive sufficient conditions for the robustness of the forward invariance property with respect to the disturbed control system.  Furthermore, Section~\ref{sec:controller-synthesis-via} provides some controller-synthesis results with respect to disturbed control systems.  Lastly, in Sections \ref{sec:robust-collision-avoidance} and \ref{sec:experiments}, we formulate the controller-synthesis procedure for differential-drive robots and apply the proposed method to the Robotarium in a long-term experiment.  Section~\ref{sec:conclusion} concludes the paper.

\section{Background Material} 
\label{sec:background}

The main contribution of this work pertains to Control Barrier Functions (CBFs) in the context of uncertain systems. As such, the theory of CBFs and differential inclusions is introduced as the main tools to address this problem.

\subsection{Notation}
\label{subsec:notation}

The notation $\mathbb{R}_{\geq a}$ represents the set of nonnegative real numbers greater or equal to $a$.  The expression $B(x', \delta)$ denotes an open ball of radius $\delta$ centered on a point $x' \in \mathbb{R}^{n}$.  The operation $\co$ represents the convex hull of a set.  A function $\alpha : \mathbb{R} \to \mathbb{R}$ is extended class-$\mathcal{K}$ if $\alpha$ is continuous, strictly increasing, and $\alpha(0) = 0$.  A function $\beta : \mathbb{R}_{\geq 0} \times \mathbb{R}_{\geq 0} \to \mathbb{R}_{\geq 0}$ is class-$\mathcal{KL}$ if it is class-$\mathcal{K}$ in its first argument and, for each fixed $r$, $\beta(r, \cdot)$ is continuous, strictly decreasing, and $\lim_{s \to \infty} \beta(r, s) = 0$.

\subsection{Control Barrier Functions} 
\label{subsec:control-barrier-functions}

Control Barrier Functions (CBFs) are formulated with respect to control systems \cite{ames2014,xu2015,AmesBarriers}, and this work considers control-affine systems
\begin{equation}
    \label{eq:control-affine}
    \dot{x}(t) = f(x(t)) + g(x(t))u(x(t)) , x(0) = x_{0} ,
\end{equation}
where $f : \mathbb{R}^{n} \to \mathbb{R}^{n}$, $g : \mathbb{R}^{n} \to \mathbb{R}^{n \times m}$, and $u : \mathbb{R}^{n} \to \mathbb{R}^{m}$ are continuous.  These types of systems capture most robotic systems (e.g., differential-drive robots, quadrotors, autonomous vehicles) and remain amenable to controller synthesis, which is demonstrated later.  A set $\mathcal{C}$ is called forward invariant with respect to \eqref{eq:control-affine} if given a solution (potentially nonunique) to \eqref{eq:control-affine} $x : [0, t_{1}] \to \mathbb{R}^{n}$
\begin{equation}
    x_{0} \in \mathcal{C} \implies x(t) \in \mathcal{C}, \forall t \in [0, t_{1}] .
\end{equation}

Barrier functions ensure forward invariance of a particular set that typically represents a constraint in a robotic system, such as collision avoidance or connectivity maintenance.   Specifically, a barrier function is a continuously differentiable function $h : \mathbb{R}^{n} \to \mathbb{R}$ (sometimes referred to as a candidate barrier function), and the so-called safe set $\mathcal{C} \subset \mathbb{R}^{n}$ is defined as the super-zero level set to $h$
\begin{equation}
    \mathcal{C} = \{x' \in \mathbb{R} : h(x') \geq 0\} .
\end{equation}
Now, the goal becomes to ensure the forward set invariance of $\mathcal{C}$, which can be done equivalently by guaranteeing positivity of $h$ along trajectories.

Positivity can be shown if there exists a locally Lipschitz extended class-$\mathcal{K}$ function $\alpha : \mathbb{R} \to \mathbb{R}$ and a continuous function $u : \mathbb{R}^{n} \to \mathbb{R}^{m}$ such that
\begin{equation}
    \label{eq:barrier-certificate-reg}
    \nabla h(x')^{\top}(f(x') + g(x')u(x')) \geq -\alpha(h(x')), \forall x' \in \mathbb{R}^{n}.
\end{equation}
Then, $h$ is called a valid CBF for \eqref{eq:control-affine} \cite{AmesBarriers}.  Note that \eqref{eq:barrier-certificate-reg} does not explicitly account for any uncertainty in the system.  As such, real-world disturbances (e.g., packet loss or wheel slip) can cause the system to violate the constraint.  Toward resolving this issue, the next section formulates an analogous result in the context of uncertain control systems.

\subsection{Differential Inclusions}
\label{subsec:differential-inclusions}

Differential inclusions are a generalization of differential equations that have been used to represent a variety of problems including perturbed Ordinary Differential Equations (ODEs) as in \eqref{eq:control-affine} and discontinuous dynamical systems \cite{PaulNBF}.  Uncertain or disturbed systems, in the context of this paper, fall into a particular class of differential inclusions.  As such, this section presents the high-level theory of differential inclusions, and later sections formulate the disturbed system that this work considers.

In general, differential inclusions are formulated as 
\begin{equation} 
    \label{eq:diffInc}
    \dot{x}(t) \in F(x(t)), x(0) = x_{0} ,
\end{equation}
where $F : \mathbb{R}^{n} \to 2^{\mathbb{R}^{n}}$ is an upper semi-continuous set-valued map that takes nonempty, convex, and compact values.  A set-valued map is called upper semi-continuous if for every $\epsilon > 0$, $x' \in \mathbb{R}^{n}$ there exists $\delta > 0$ such that 
\begin{equation}
    F(y) \subset F(x') + B(0, \epsilon), \forall y \in B(x', \delta) .
\end{equation}
Note that the term $F(\cdot) + B(0, \epsilon)$ is meant to be taken as a set-valued addition.  That is,
\begin{equation}
    F(\cdot) + B(0, \epsilon) = \{v + e : v \in F(\cdot), \|e\| \leq \epsilon\} .
\end{equation}

Under the given assumptions for $F$ in \eqref{eq:diffInc}, existence of a particular type of solution, a Carath\'eodory solution, may be guaranteed \cite{cortes2008}. A Carath\'eodory solution to \eqref{eq:diffInc} is an absolutely continuous trajectory $x$ : $[0, t_1] \to \mathbb{R}^n$ such that $\dot{x}(t) \in F(x(t))$ almost everywhere on $[0, t_{1}]$ and $x(0) = x_{0}$.  In this case, uniqueness can by no means be guaranteed.  For a more comprehensive survey of differential inclusions, see \cite{cortes2008discontinuous}.


Forward invariance in the context of differential inclusions typically admits two standard definitions, stemming from the nonuniqueness of solutions \cite{cortes2008discontinuous}.  Weak invariance insists that, given an initial condition in $\mathcal{C}$, at least one solution remains in the set for all time.  Strong invariance requests that, given an initial condition in $\mathcal{C}$, all solutions stay in the set for all time.  This paper provides results for strong invariance and simply refers to this quality as invariance.


\subsection{Barrier Functions for Differential Inclusions} 
\label{subsec:barrier-functions-for-differential-inclusions}

The work in \cite{PaulNBF} generalizes the result in \eqref{eq:barrier-certificate-reg} to nonsmooth barrier functions and differential inclusions.  However, in the case that the barrier function is smooth, the same result applies to a system described by a differential inclusion.  As such, this result becomes useful for this work, and it is subsequently stated in a form that has been modified to fit the terminology of this work.  In the next section, we specialize this result for the purpose of robust control and apply it to validate CBFs for disturbed control systems.  Note that the results here are originally phrased for uncontrolled systems in \cite{PaulNBF}; however, for brevity, we still refer to barrier functions as CBFs, as the same results hold by considering the closed-loop system.

\begin{definition}{\cite[Definition~4]{PaulNBF}} 
    A locally Lipschitz function $h : \mathbb{R}^{n} \to \mathbb{R}$ is a valid \emph{Control Barrier Function (CBF)} for \eqref{eq:diffInc} if and only if $x_{0} \in \mathcal{C}$ implies that there exists a class-$\mathcal{KL}$ function $\beta : \mathbb{R}_{\geq 0} \times \mathbb{R}_{\geq 0} \to \mathbb{R}_{\geq 0}$ such that 
    \begin{equation}
        h(x(t)) \geq \beta(h(x_{0}), t), \forall t \in [0, t_{1}] ,
    \end{equation}
    for every Carath\'eodory solution $x : [0, t_{1}] \to \mathbb{R}^{n}$ starting from $x_{0}$.
\end{definition}

\begin{theorem}{\cite[Theorem~1]{PaulNBF}}
    \label{thm:valid-cbf}
    Let $h : \mathbb{R}^{n} \to \mathbb{R}$ be a continuously differentiable function.  If there exists a locally Lipschitz extended class-$\mathcal{K}$ function $\alpha: \mathbb{R} \to \mathbb{R}$ such that 
    \begin{equation}
        \min \nabla h(x')^{\top}F(x') \geq - \alpha(h(x')), \forall x' \in \mathbb{R}^n,
    \end{equation}
    then $h$ is a valid CBF for \eqref{eq:diffInc}.
\end{theorem}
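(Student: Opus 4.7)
The plan is to reduce the statement to a scalar comparison argument: along any Carath\'eodory solution of \eqref{eq:diffInc}, I will show that the composition $h \circ x$ satisfies the differential inequality $\frac{d}{dt} h(x(t)) \geq -\alpha(h(x(t)))$ almost everywhere, and the flow of the companion scalar ODE $\dot z = -\alpha(z)$ will then supply the class-$\mathcal{KL}$ lower bound $\beta$ required by the definition of a valid CBF.

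First, if $x : [0, t_1] \to \mathbb{R}^n$ is a Carath\'eodory solution, then $x$ is absolutely continuous with $\dot x(t) \in F(x(t))$ a.e., so the $C^1$ assumption on $h$ gives $\frac{d}{dt} h(x(t)) = \nabla h(x(t))^\top \dot x(t)$ almost everywhere. Because $F(x(t))$ is nonempty and compact, the minimum in the hypothesis is attained, and substituting $\dot x(t)$ yields $\frac{d}{dt} h(x(t)) \geq \min \nabla h(x(t))^\top F(x(t)) \geq -\alpha(h(x(t)))$ a.e. Next, since $\alpha$ is locally Lipschitz with $\alpha(0)=0$, the scalar Cauchy problem $\dot z = -\alpha(z)$, $z(0)=r$, has a unique solution $z_r$; because $\alpha$ is strictly increasing and vanishes only at $0$, trajectories starting at $r>0$ are strictly decreasing and asymptotically converge to $0$, while $z_0 \equiv 0$. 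A comparison result then gives $h(x(t)) \geq z_{h(x_0)}(t)$ on $[0, t_1]$. Setting $\beta(r, t) := z_r(t)$ for $r, t \geq 0$, continuous dependence on initial conditions and the monotonicity of the scalar flow imply that $\beta(\cdot, t)$ is continuous, strictly increasing, and zero at $0$, while $\beta(r, \cdot)$ is continuous, strictly decreasing, and tends to $0$; hence $\beta$ is class-$\mathcal{KL}$ and the desired inequality holds.

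The main obstacle is the comparison step itself: $h \circ x$ is only absolutely continuous and the differential inequality holds merely almost everywhere, so a na\"ive comparison result stated for continuously differentiable functions does not apply directly. I would address this by either citing a standard comparison lemma for absolutely continuous scalar functions, or arguing directly via Gronwall: assume for contradiction that $w := h \circ x - z_{h(x_0)}$ becomes negative at some $t_0$; set $t^* = \sup\{t \leq t_0 : w(t) \geq 0\}$ so that $w(t^*) = 0$ and $w < 0$ on $(t^*, t_0]$; the local Lipschitz constant $L$ of $\alpha$ on a neighborhood of the bounded range of $h \circ x$ and $z_{h(x_0)}$ then yields $\dot w \geq \alpha(z_{h(x_0)}) - \alpha(h \circ x) \geq -L|w|$ a.e.\ on that interval, whereupon Gronwall's inequality forces $w(t_0) \geq w(t^*) e^{-L(t_0 - t^*)} = 0$, a contradiction.
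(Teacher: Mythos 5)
Your argument is correct, but it is not the route the paper takes: the paper does not prove Theorem~\ref{thm:valid-cbf} at all. It imports the statement wholesale from \cite[Theorem~1]{PaulNBF}, which is a nonsmooth result phrased in terms of the Clarke generalized gradient $\partial_c h$ and a set-valued Lie derivative, and the accompanying remark merely observes that for continuously differentiable $h$ one has $\partial_c h(x') = \{\nabla h(x')\}$, so the cited theorem specializes verbatim to the form stated here. You instead give a self-contained, from-scratch proof: chain rule almost everywhere along an absolutely continuous Carath\'eodory solution, the pointwise bound $\tfrac{d}{dt}h(x(t)) \geq \min \nabla h(x(t))^{\top}F(x(t)) \geq -\alpha(h(x(t)))$, and a comparison with the scalar flow $\dot z = -\alpha(z)$ whose solution map supplies $\beta(r,t) = z_r(t)$. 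This is essentially the standard argument that underlies the cited result, specialized to the smooth case; what it buys is elementary machinery (no Clarke calculus, no nonsmooth chain rule) and transparency about where each hypothesis is used, at the cost of redoing work the reference already covers in greater generality. Your handling of the genuinely delicate step --- that $h\circ x$ is only absolutely continuous, so the comparison must be run via a first-crossing-time and Gronwall argument rather than a $C^1$ comparison lemma --- is the right fix; note that on the interval where $w<0$ the monotonicity of $\alpha$ already gives $\dot w \geq \alpha(z) - \alpha(h\circ x) \geq 0$ a.e., so the Lipschitz constant and Gronwall are not even needed there. The only loose ends are cosmetic: global existence of $z_r$ on $[0,\infty)$ (immediate since $z_r$ is monotone and trapped in $[0,r]$ by uniqueness at the equilibrium $z\equiv 0$), and the degenerate case $\beta(0,\cdot)\equiv 0$, which is not strictly decreasing but is universally tolerated by the class-$\mathcal{KL}$ convention.
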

\begin{remark}
    In \cite{PaulNBF}, the gradient $\nabla h$ is replaced with the Clarke generalized gradient $\partial_{c} h : \mathbb{R}^{n} \to 2^{\mathbb{R}^{n}}$ (see \cite{clarke1990}) and $h$ is only assumed to be locally Lipschitz continuous.  However, in the case that $h$ is continuously differentiable, these two objects are equivalent.  Hence, Theorem~\ref{thm:valid-cbf} is equivalent to \cite[Theorem~1]{PaulNBF} in the context of this work.
\end{remark}

\section{Barrier Functions for Disturbed Dynamical Systems} 
\label{sec:barrier-functions-for-disturbed}

This section discusses a formulation of CBFs that can account for a large class of disturbances in an efficient manner.  In particular, this section contains an extension of Theorem~\ref{thm:valid-cbf} to disturbed control-affine systems.  Then, a class of systems with disturbances described by convex sets is presented along with efficient methods for validating the associated CBFs.

In this paper, we will focus on disturbed control-affine systems that can be modelled through the following differential inclusion
\begin{equation} 
    \label{eq:control-affine-disturbed}
    \dot{x}(t) \in f(x(t)) + g(x(t))u(x(t)) + D(x(t)), x(0) = x_{0} ,
\end{equation}
where $D : \mathbb{R}^{n} \to 2^{\mathbb{R}^{n}}$ (the disturbance) is an upper semi-continuous set-valued map that takes nonempty, convex, and compact values; and $f$, $g$, $u$ are as in \eqref{eq:control-affine}.  The assumption on the convexity of $D$ may be seen as restrictive, but we later show that this assumption is actually highly nonrestrictive.  Following from Theorem~\ref{thm:valid-cbf}, we can state a corollary that specifically targets \eqref{eq:control-affine-disturbed}.

\begin{corollary}
    \label{cor:disturbed-cbf}
    Let $h : \mathbb{R}^n \to \mathbb{R}$ be a continuously differentiable function.  If there exists a continuous function $u : \mathbb{R}^{n} \to \mathbb{R}^{m}$ and a locally Lipschitz extended class-$\mathcal{K}$ function $\alpha: \mathbb{R} \to \mathbb{R}$ such that 
    \begin{align}
        \begin{split}
            & \min \nabla h(x')^{\top}(f(x') + g(x')u(x') + D(x')) \geq \\
            & - \alpha(h(x')), \forall x' \in \mathbb{R}^n,
        \end{split}
    \end{align}
    then $h$ is a valid CBF for \eqref{eq:control-affine-disturbed}.
\end{corollary}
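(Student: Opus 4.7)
The plan is to recognize that, once a continuous feedback $u$ is substituted, the disturbed control-affine inclusion \eqref{eq:control-affine-disturbed} is an instance of the generic differential inclusion \eqref{eq:diffInc} with set-valued right-hand side $F(x') := f(x') + g(x')u(x') + D(x')$. If $F$ meets the structural hypotheses required by Theorem~\ref{thm:valid-cbf} (upper semi-continuous with nonempty, convex, and compact values), then the corollary's inequality is verbatim the premise of that theorem applied to this $F$, and the conclusion follows.

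First I would verify the pointwise properties of $F(x')$. For each fixed $x'$, the set $F(x')$ is simply the translation of $D(x')$ by the single vector $f(x') + g(x')u(x')$. Hence nonemptiness, convexity, and compactness of $F(x')$ are inherited directly from the corresponding assumptions imposed on $D(x')$ in \eqref{eq:control-affine-disturbed}.

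Next I would establish upper semi-continuity of $F$. Fix $x' \in \mathbb{R}^{n}$ and $\epsilon > 0$. Continuity of $f$, $g$, and $u$ supplies $\delta_{1} > 0$ with $\|f(y) + g(y)u(y) - f(x') - g(x')u(x')\| < \epsilon/2$ whenever $y \in B(x', \delta_{1})$. Upper semi-continuity of $D$ supplies $\delta_{2} > 0$ with $D(y) \subset D(x') + B(0, \epsilon/2)$ whenever $y \in B(x', \delta_{2})$. Taking $\delta = \min\{\delta_{1}, \delta_{2}\}$, any $v \in F(y)$ with $y \in B(x', \delta)$ decomposes as $v = (f(y) + g(y)u(y)) + d$ with $d \in D(y)$, and a routine triangle-inequality argument, combined with the set-valued-addition convention recalled in Section~\ref{subsec:differential-inclusions}, places $v$ in $F(x') + B(0, \epsilon)$.

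With the structural hypotheses confirmed, Theorem~\ref{thm:valid-cbf} applies directly: the inequality assumed in the corollary is exactly $\min \nabla h(x')^{\top} F(x') \geq -\alpha(h(x'))$ after distributing the inner product through the set-valued sum, so $h$ is a valid CBF for \eqref{eq:control-affine-disturbed}. The only genuine obstacle is the upper semi-continuity check; the pointwise properties of $F(x')$ are essentially immediate, and the remainder of the argument is bookkeeping.
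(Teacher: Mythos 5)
Your proof is correct and follows exactly the route the paper intends: the paper offers no explicit proof of Corollary~\ref{cor:disturbed-cbf}, presenting it as an immediate application of Theorem~\ref{thm:valid-cbf} to the closed-loop set-valued map $F(x') = f(x') + g(x')u(x') + D(x')$. Your verification that this $F$ inherits nonempty, convex, compact values from $D$ and is upper semi-continuous (via the continuity of $f$, $g$, $u$ and the assumed upper semi-continuity of $D$) is precisely the bookkeeping the paper leaves implicit.
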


By using the corollary above, the main contribution of this paper utilizes the properties of convex hulls to generate a control law robust to set-valued disturbances (i.e., as in \eqref{eq:control-affine-disturbed}).  Specifically, we assume that a convex hull of $p > 0$ continuous functions $\psi_{i} : \mathbb{R}^{n} \to \mathbb{R}^{n}$, $i \in \{1, \hdots, p\}$ captures the disturbance.  That is, 
\begin{equation} 
    \label{eq:defPsi}
    D(x') = \co\Psi(x') = \co\{\psi_1(x')\ldots\psi_p(x')\}, \forall x' \in \mathbb{R}^{n} .
\end{equation}

In this case, the differential inclusion in \eqref{eq:control-affine-disturbed} can be re-written as
\begin{equation} 
    \label{eq:control-affine-disturbed-co}
    \dot{x}(t) \in f(x(t)) + g(x(t))u(x(t)) + \co \Psi(x(t)) ,
\end{equation}
and the first main result of this paper pertains to validating CBFs with respect to \eqref{eq:control-affine-disturbed-co}.
\begin{lemma}
    \label{lem:usc-disturbance}
    Let $\psi_{i} : \mathbb{R}^{n} \to \mathbb{R}^{n}$, $i \in \{1, \hdots, p\}$ be a set of $p > 0$ continuous functions.  Then, $D : \mathbb{R}^{n} \to 2^{\mathbb{R}^{n}}$ defined as 
    \begin{equation}
        D(x') = \co \Psi(x') = \co \{\psi_{i}(x') : i \in \{1, \hdots, p\}\}, \forall x' \in \mathbb{R}^{n}
    \end{equation}
    is an upper semi-continuous set-valued map that takes compact, nonempty, and convex values.
\end{lemma}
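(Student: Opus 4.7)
The plan is to verify the four required properties of $D$ in turn, with the bulk of the work concentrated in establishing upper semi-continuity.

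\textbf{Nonempty, convex, compact values.} These three properties follow essentially from the definition of the convex hull applied to a finite set. Since $p > 0$, each $\Psi(x')$ contains at least the point $\psi_1(x')$, so $D(x')$ is nonempty. By definition, the convex hull of any set is convex. Finally, the convex hull of a finite set of points in $\mathbb{R}^{n}$ is a polytope, hence closed and bounded, and thus compact. No continuity of $\psi_i$ is needed for these three properties; only the finiteness of the index set matters.

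\textbf{Upper semi-continuity.} This is the step that requires actual work and where the continuity assumption on each $\psi_i$ is used. Fix $x' \in \mathbb{R}^{n}$ and $\epsilon > 0$. Since each $\psi_i$ is continuous at $x'$, for each $i \in \{1,\ldots,p\}$ there exists $\delta_i > 0$ with $\|\psi_i(y) - \psi_i(x')\| < \epsilon$ for all $y \in B(x', \delta_i)$. Taking $\delta = \min_{i} \delta_i > 0$ (finite minimum, so strictly positive), this bound holds simultaneously for every $i$ whenever $y \in B(x',\delta)$. I would then take an arbitrary $v \in D(y)$, write it as a convex combination $v = \sum_{i=1}^{p} \lambda_i \psi_i(y)$ with $\lambda_i \geq 0$ and $\sum_i \lambda_i = 1$, and decompose
\begin{equation}
    v = \sum_{i=1}^{p} \lambda_i \psi_i(x') + \sum_{i=1}^{p} \lambda_i\bigl(\psi_i(y) - \psi_i(x')\bigr).
\end{equation}
The first term lies in $D(x')$ by definition of the convex hull, and the second term has norm bounded by $\sum_i \lambda_i \|\psi_i(y) - \psi_i(x')\| < \epsilon$ via the triangle inequality and the convexity weights. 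Therefore $v \in D(x') + B(0,\epsilon)$, which yields $D(y) \subset D(x') + B(0,\epsilon)$ for all $y \in B(x',\delta)$, as required.

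\textbf{Main obstacle.} The only slightly delicate point is the upper semi-continuity argument: one must be careful that the perturbation bound obtained from the continuity of each $\psi_i$ at a single vertex of the polytope propagates to \emph{every} convex combination uniformly. The trick of pulling $\lambda_i$ outside of the norm, combined with the constraint $\sum_i \lambda_i = 1$, handles this cleanly and avoids any dependence of $\delta$ on the particular choice of $v \in D(y)$. Since $p$ is finite, taking the minimum of the $\delta_i$ poses no issue; the argument would fail for a countably infinite family without additional uniformity.
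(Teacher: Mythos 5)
Your proof is correct and follows essentially the same route as the paper: nonemptiness, convexity, and compactness from finiteness of the vertex set, and upper semi-continuity by choosing $\delta = \min_i \delta_i$ from the continuity of each $\psi_i$. The only difference is that where the paper asserts it suffices to check the vertex sets (i.e., $\Psi(y) \subset \Psi(x') + B(0,\epsilon)$ implies the same for the convex hulls), you explicitly justify that reduction by decomposing an arbitrary convex combination and pulling the weights $\lambda_i$ through the triangle inequality --- a small but genuine improvement in rigor over the paper's version.
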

\begin{proof}
    Let $x' \in \mathbb{R}^{n}$.  Then, by definition, $D(x')$ is convex and nonempty, since $p > 0$.  Moreover, $D(x')$ is compact, since it is the convex hull of a finite number of points.  
    
    Now, it remains to show upper semi-continuity.  Let $\epsilon > 0$.  Because each $\psi_{i}$ is continuous, there exists corresponding $\delta_{i} > 0$ such that 
    \begin{equation}
        \|\psi_{i}(y) - \psi_{i}(x')\| \leq \epsilon, \forall y \in B(x', \delta_{i}) ,
    \end{equation}
    meaning that 
    \begin{equation}
        \psi_{i}(y) \in \psi_{i}(x') + B(0, \epsilon), \forall y \in B(x', \delta_{i}) .
    \end{equation}
    Set 
    \begin{equation}
        \delta = \min_{i} \delta_{i} ,
    \end{equation}
    which satisfies $\delta > 0$, because $p$ is finite.
     
    Since
    \begin{equation}
    \label{eq:disturbance}
        D(\cdot) = \co \{\psi_{i}(\cdot) : i \in \{1, \hdots, p\}\} , 
    \end{equation}
    it now suffices to show that 
    \begin{align}
        \Psi(y) \subset \Psi(x') + B(), \epsilon), \forall y \in B(x', \delta) .
    \end{align} 
    Let $i \in \{1, \hdots, p\}$ and consider $\psi_{i}(y)$.  By choice of $\delta$, 
    \begin{equation}
        \psi_{i}(y) \in \psi_{i}(x') + B(0, \epsilon) ,
    \end{equation}
    as such 
    \begin{equation}
        \psi_{i}(y) \in \Psi(x') + B(0, \epsilon) .
    \end{equation}
    Accordingly, $D$ is upper semi-continuous.
\end{proof}

\begin{theorem} 
    \label{mainTheorem}
    Let $h : \mathbb{R}^{n} \to \mathbb{R}$ be a continuously differentiable function.  Let $\psi_{i} : \mathbb{R}^{n} \to \mathbb{R}^{n}$, $i \in \{1, \hdots, p\}$ be a set of $p > 0$ continuous functions, and define the disturbance $D : \mathbb{R}^{n} \to 2^{\mathbb{R}^{n}}$ as 
    \begin{equation}
        D(x') = \co \Psi(x') = \co\{\psi_1(x')\ldots\psi_p(x')\}, \forall x' \in \mathbb{R}^{n} .
    \end{equation}
    If there exists a continuous function $u : \mathbb{R}^{n} \to \mathbb{R}^{m}$ and a locally Lipschitz extended class-$\mathcal{K}$ function $\alpha : \mathbb{R} \to \mathbb{R}$ such that 
    \begin{equation} 
        \label{eq:mainTheorem}
            \begin{split}
                & \nabla h(x')^{\top}(f(x') + g(x')u(x')) \geq \\
                & -\alpha(h(x')) - \min \nabla h(x')^{\top} \Psi(x'), \forall x' \in \mathbb{R}^{n} ,
            \end{split}
    \end{equation}
    then $h$ is a valid CBF for \eqref{eq:control-affine-disturbed}.
\end{theorem}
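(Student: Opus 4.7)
The plan is to reduce Theorem~\ref{mainTheorem} to Corollary~\ref{cor:disturbed-cbf}, which already packages Theorem~\ref{thm:valid-cbf} for the disturbed control-affine form in \eqref{eq:control-affine-disturbed}. Before invoking the corollary I would first confirm that $D = \co\Psi$ is a legitimate set-valued disturbance of the kind the corollary expects, namely an upper semi-continuous map with nonempty, convex, and compact values; this is exactly the content of the freshly proved Lemma~\ref{lem:usc-disturbance}, so I would cite it directly and proceed.

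The algebraic heart of the argument would then be to show that the hypothesis \eqref{eq:mainTheorem} is equivalent to the premise of Corollary~\ref{cor:disturbed-cbf}. Because $f(x') + g(x')u(x')$ is a single vector at each $x'$, the set-valued sum interacts with the linear map $v \mapsto \nabla h(x')^{\top} v$ in the natural way, giving
\begin{equation*}
    \min \nabla h(x')^{\top} \bigl(f(x') + g(x')u(x') + D(x')\bigr) = \nabla h(x')^{\top}\bigl(f(x') + g(x')u(x')\bigr) + \min \nabla h(x')^{\top} D(x').
\end{equation*}
Next I would appeal to the classical fact that a linear functional attains its infimum over a convex polytope at a vertex, so that
\begin{equation*}
    \min \nabla h(x')^{\top} \co \Psi(x') = \min \nabla h(x')^{\top} \Psi(x').
\end{equation*}
Combining these two identities with a rearrangement of \eqref{eq:mainTheorem} would yield
\begin{equation*}
    \min \nabla h(x')^{\top}\bigl(f(x') + g(x')u(x') + D(x')\bigr) \geq -\alpha(h(x')), \quad \forall x' \in \mathbb{R}^{n},
\end{equation*}
which is precisely the premise of Corollary~\ref{cor:disturbed-cbf}; applying that corollary closes the proof.

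The main obstacle, modest as it is, is the vertex step: one must verify carefully that the minimum of a linear function over the convex hull of finitely many points is attained at one of those points, so that nothing is lost by replacing $\min \nabla h(x')^{\top} D(x')$ with $\min \nabla h(x')^{\top} \Psi(x')$. This is standard convex-analysis bookkeeping, but it is precisely the step that converts a minimization over the continuum $D(x')$ into a tractable minimization over $p$ sample vectors, which in turn underpins the linear computational complexity advertised in the introduction and makes the downstream controller-synthesis QP real-time feasible.
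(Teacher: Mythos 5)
Your proposal is correct and follows essentially the same route as the paper: reduce to Corollary~\ref{cor:disturbed-cbf} after verifying via Lemma~\ref{lem:usc-disturbance} that $\co\Psi$ is an admissible disturbance, then use the identity $\min \nabla h(x')^{\top}\co\Psi(x') = \min \nabla h(x')^{\top}\Psi(x')$ to rearrange the inequality. The only cosmetic difference is that the paper cites an external lemma for that identity where you argue it directly from the fact that a linear functional attains its minimum over a polytope at a vertex.
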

\begin{proof}
    \label{thm:valid-cbf-disturbed}
    We begin by substituting the definition of $D(x')$ from \eqref{eq:defPsi} into the result from Corollary~\ref{cor:disturbed-cbf}.  In particular, it must be shown that
    \begin{align}
        \begin{split}
            & \min \nabla h(x')^{\top}(f(x') + g(x')u(x') + \co \Psi(x')) \geq \\
            & - \alpha(h(x')), \forall x' \in \mathbb{R}^{n} .
        \end{split}
    \end{align}
    By Lemma~\ref{lem:usc-disturbance}, $\co \Psi$ is an upper semi-continuous set-valued map that takes nonempty, convex, and compact values, so the results of Corollary~\ref{cor:disturbed-cbf} may be applied.
    
    Note that, for any $x' \in \mathbb{R}^{n}$, the condition above is equivalent to
    \begin{align} 
        \begin{split}
            \label{eq:ineq1}
            & \nabla h(x')^{\top}(f(x') + g(x')u(x')) \geq \\
            & -\alpha(h(x')) - \min \nabla h(x')^{\top} \co \Psi(x') .
        \end{split}
    \end{align}
    We can then take advantage of the properties of the convex hull (see \cite[Lemma~3]{PaulNBF}) through the following equality
    \begin{equation} 
        \label{eq:minCo}
        \min \nabla h(x')^{\top} \text{co} \Psi(x') = \min \nabla h(x')^{\top} \Psi(x') .
    \end{equation}
    Thus, by substituting \eqref{eq:minCo} into \eqref{eq:ineq1}, we obtain \eqref{eq:mainTheorem}.
\end{proof}
\begin{remark}
    \label{rem:min-continuous}
    Note that the function 
    $$
        x' \mapsto \min \nabla h(x')^{\top} \co \Psi(x') = \min_{i = \{1, \hdots, p\}} \nabla h(x')^{\top}\psi_{i}(x') 
    $$
    is continuous, since it is a minimum of continuous functions.
\end{remark}

An interesting aspect of Theorem~\ref{thm:valid-cbf-disturbed} is that only the extreme points of $\co \Psi(\cdot)$ must be evaluated (i.e., each $\psi_{i}(\cdot)$).  This evaluation remains equivalent to checking every disturbance in $\co \Psi(\cdot)$.  Thus, infinitely many disturbances are addressed by checking a finite number of points.  As such, Theorem~\ref{thm:valid-cbf-disturbed} can be directly used for synthesizing controllers robust to disturbance, because the computational cost of $\min \nabla h(x')^{\top} \Psi(x')$ is linear with respect to the size of the set $\Psi(x')$.

Notably, in Theorem~\ref{thm:valid-cbf-disturbed}, it may not be the case that:
\begin{equation} 
    \label{eq:addZero}
    \mathbf{0}_{n} \in D(x'), \forall x' \in \mathbb{R}^{n} ,
\end{equation}
and therefore undisturbed trajectories may be unmodeled. However, if \eqref{eq:addZero} holds, then the undisturbed solutions that start within $\mathcal{C}$ also remain within $\mathcal{C}$.  Additionally, Theorem~\ref{thm:valid-cbf-disturbed} readily generalizes to the case where $D$ is the union of the convex hulls of a finite number of function-valued points, as shown by the following proposition.

\begin{proposition}
    \label{prop:union-of-convex-hulls}
    Define 
    \begin{equation} 
        \label{eq:Union}
        D(x') = \bigcup_{i=1}^{q} \co(\Psi_{i}(x')), \forall x' \in \mathbb{R}^{n}
    \end{equation}
    such that
    \begin{equation}
        \Psi_{i}(x') = \{\psi^i_{1}(x'), \hdots, \psi^{i}_{p_{i}}(x')\}, \forall i \in \{1, \hdots, q\}, 
    \end{equation}
    where each $\psi^{i}_{j}(\cdot)$ is continuous. Let $h : \mathbb{R}^{n} \to \mathbb{R}$ be a continuously differentiable function.  If there exists a continuous function $u : \mathbb{R}^{n} \to \mathbb{R}^{m}$ and a locally Lipschitz extended class-$\mathcal{K}$ function $\alpha : \mathbb{R} \to \mathbb{R}$ such that 
    \begin{align} 
        \begin{split}
            & \nabla h(x')^{\top}(f(x') + g(x')u(x')) \geq \\
            & -\alpha(h(x')) - \min \nabla h(x')^{\top} \Psi_i(x'), \forall i \in \{1, \hdots, q\} ,
        \end{split}
    \end{align}
    then $h$ is a valid CBF for \eqref{eq:control-affine-disturbed-co}.
\end{proposition}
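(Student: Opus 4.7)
My plan is to reduce Proposition~\ref{prop:union-of-convex-hulls} to Theorem~\ref{mainTheorem} by enlarging the possibly nonconvex disturbance $D$ to a single convex hull that over-approximates it. Specifically, I would define
\[
    \bar D(x') \;=\; \co \bigcup_{i=1}^{q} \Psi_i(x') \;=\; \co\{\psi_j^i(x') : 1 \le i \le q,\; 1 \le j \le p_i\} .
\]
Then $\bar D$ is the convex hull of a finite family of continuous functions, so by Lemma~\ref{lem:usc-disturbance} it is upper semi-continuous with nonempty, convex, and compact values, and Theorem~\ref{mainTheorem} applies to the differential inclusion driven by $\bar D$.

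Next, I would observe that $\co \Psi_i(x') \subset \bar D(x')$ for each $i$, hence $D(x') \subset \bar D(x')$ pointwise. Thus every Carath\'eodory solution of the $D$-disturbed inclusion is also a Carath\'eodory solution of the $\bar D$-disturbed inclusion, and any invariance guarantee obtained for $\bar D$ transfers to $D$. It therefore suffices to verify the premise of Theorem~\ref{mainTheorem} for $\bar D$, namely
\[
    \nabla h(x')^\top (f(x') + g(x') u(x')) \;\geq\; -\alpha(h(x')) \;-\; \min \nabla h(x')^\top \bigcup_{i=1}^{q} \Psi_i(x') .
\]

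The final step is the elementary identity $\min_{v \in \bigcup_i S_i} \phi(v) = \min_i \min_{v \in S_i} \phi(v)$, applied with $\phi(v) = \nabla h(x')^\top v$ and $S_i = \Psi_i(x')$. The $q$ pointwise hypotheses $\nabla h^\top(f+gu) + \alpha(h) \geq -\min \nabla h^\top \Psi_i$ are simultaneously satisfied if and only if $\nabla h^\top(f+gu) + \alpha(h) \geq \max_i\bigl(-\min \nabla h^\top \Psi_i\bigr) = -\min_i \min \nabla h^\top \Psi_i$, which is precisely the $\bar D$ condition above.

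The only subtlety I anticipate is the handover from the nonconvex-valued $D$ to the convex-valued $\bar D$: one might initially try to apply Corollary~\ref{cor:disturbed-cbf} to $D$ directly, but a union of convex hulls need not be convex, so the hypotheses of that corollary can fail. Enlarging to $\bar D$ sidesteps this regularity issue at no cost, since over-approximating the disturbance only enlarges the admissible trajectory set, preserving the invariance conclusion. Everything beyond this enlargement is routine algebraic manipulation of minima over finite unions.
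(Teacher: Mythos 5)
Your proof is correct and follows essentially the same route as the paper: the paper's own proof applies Theorem~\ref{mainTheorem} $q$ times together with the identity $\min(\bigcup_{i=1}^{q} Y_i) = \min\{\min(Y_1),\ldots,\min(Y_q)\}$, and Remark~\ref{rem:magic-method} records exactly your convexification $\co\bigcup_{i=1}^{q}\Psi_i(x')$. If anything, your explicit over-approximation $D(x') \subset \bar D(x')$ handles the nonconvexity of the union more carefully than the paper's terse ``apply the theorem $q$ times,'' but the ingredients are identical.
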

\begin{proof} 
    This proposition is directly obtained by applying Theorem~\ref{thm:valid-cbf-disturbed} $q$ times.  Given that inequality \eqref{eq:mainTheorem} holds for each $D_i(x') = \co \Psi_i(x')$ then it follows that it holds for $ D(x') = \bigcup \limits_{i=1}^{q} D_i(x')$, since
\begin{gather}
    \min(\bigcup\limits_{i=1}^{q}Y_i) = \min(\{\min(Y_1),\ldots,\min(Y_q)\})  ,
\end{gather}
where each $Y_i$ is a compact set containing an arbitrary number of scalar values. 
\end{proof}
\begin{remark}
    \label{rem:magic-method}
    The line of reasoning in the proof above illustrates the fact that $h$ is a valid CBF for 
    \begin{equation}
        D(x') = \bigcup_{i=1}^{q} \co \Psi_{i}(x'), \forall x' \in \mathbb{R}^{n}
    \end{equation}
    if and only if it is also valid for
    \begin{equation}
        D(x') = \co \bigcup\limits_{i=1}^{q} \Psi_{i}(x'), \forall x' \in \mathbb{R}^{n}
    \end{equation}
\end{remark}

The nonconvexity of $D$, as given in Proposition~\ref{prop:union-of-convex-hulls}, may appear to pose a problem, as the sufficient conditions for the existence of solutions to a differential inclusion requires that the set-valued map takes convex values (see \eqref{eq:diffInc}).  However, as noted by Remark~\ref{rem:magic-method}, $D$ may be equivalently defined as 
\begin{equation}
    D(x') = \co \bigcup\limits_{i=1}^{q} \Psi_{i}(x'), \forall x' \in \mathbb{R}^{n} ,
\end{equation}
which is indeed convex.

Describing disturbances as a finite union of convex sets encodes a very wide class of disturbances.  However, Remark~\ref{rem:magic-method} implies that considering this nonconvex disturbance is actually completely equivalent to considering the convex hull of the disturbance.  As such, this result indicates that utilizing convex hulls to approximate a disturbance equivalently addresses a wide class of nonconvex disturbances.

The simplicity of the condition presented in Theorem~\ref{thm:valid-cbf-disturbed} allows for the synthesis of a controller that satisfies \eqref{eq:mainTheorem} for a given system in real time, since only a finite number of points need to be evaluated.  In the next section, we present a QP for the purposes of controller synthesis, which we apply to the Robotarium as presented in Section~\ref{sec:robust-collision-avoidance}.

\section{Controller Synthesis via Quadratic Program}
\label{sec:controller-synthesis-via}

Many of the Robotarium's users do not incorporate collision avoidance into their algorithms (e.g. \cite{Hatanaka2009,Park2016}). This may be due to lack of knowledge, since access to the Robotarium is not restricted to roboticists and controls researchers, and/or the fact that many of these algorithms are designed for higher level objectives. To mitigate this issue, we minimally alter the robots' control inputs so as to avoid collisions through the application of the framework presented below. The choice of using CBFs instead of its alternatives (e.g. potential functions, Lyapunov functions) stems from the fact that CBFs do not encode any underlying assumptions about the user's objective, making them the ideal choice for this application. Moreover, since differential-drive robots are susceptible to issues such as wheel slip and network latency, by incorporating noise into the CBFs, we render this framework robust to these various risks of collision. In this section, we will introduce the QP used to synthesize the minimally invasive controller with respect to a nominal input.  

The dynamics of the robots are modeled as follows, 
\begin{gather} 
    \label{eq:diffIncU}
    \dot{x}(t) \in f(x(t)) + g(x(t))(u(x(t)) + \co \Psi) , 
\end{gather}
where $\Psi = \{\psi_1, \hdots, \psi_p\} \subset \mathbb{R}^{n}$, for some $p > 0$.

Note that the disturbance appears in the control input. Referring to \eqref{eq:control-affine-disturbed}, we can re-write the latter as
\begin{gather} 
    \label{eq:diffIncU2}
    \dot{x}(t) \in f(x(t)) + g(x(t))u(x(t)) + D(x(t)) ,
\end{gather}
where $D(x(t)) = \co g(x(t))\Psi$, via properties of the convex hull operation in multiplication.  Note that each function $x' \mapsto g(x')\psi_{i}$ is a continuous function, because $g$ is assumed to be continuous and $\Psi$ is a collection of constant vectors.  The reason that we choose to model the disturbance as such is because various issues (e.g., wheel slip) can be modelled as offsets to the input motor velocities.
\begin{proposition}
    \label{prop:controller-synthesis}
    Let $h : \mathbb{R}^{n} \to \mathbb{R}$ be a continuously differentiable function, and let $\psi_{i} \in \mathbb{R}^{n}$, $i \in \{1, \hdots, p\}$ be a set of $p > 0$ constant vectors.  Define
    \begin{equation}
        D(x') = \co g(x')\Psi = \co g(x')\{\psi_{i} : i \in \{1, \hdots, p\}\}, 
    \end{equation}
    for all $x' \in \mathbb{R}^{n}$.  If $u^{*} : \mathbb{R}^{n} \to \mathbb{R}^{m}$ defined as  
    \begin{align}
        \label{QP}
        & u^{*}(x') = \argmin_{u \in \mathbb{R}^{m}} \|u_{\text{nom}}(x') - u\|^2 \\
        & \text{s.t. } \nabla h(x')^{\top}(f(x') + g(x')u(x')) \geq -\alpha(h(x')) \\
        & \qquad - \min \nabla h(x')^{\top} g(x')\Psi(x')
    \end{align}
    is continuous, then $h$ is valid CBF for \eqref{eq:control-affine-disturbed}.
\end{proposition}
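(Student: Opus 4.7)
The plan is to apply Theorem~\ref{mainTheorem} directly, recognizing that the present proposition is essentially a QP‐style packaging of that theorem. The key observation is that although the $\psi_i$ in the proposition are constant vectors, the effective disturbance appearing in the differential inclusion \eqref{eq:diffIncU2} is $D(x') = \co g(x') \Psi$, which is a convex hull of the \emph{continuous} state-dependent functions $x' \mapsto g(x')\psi_i$. I would therefore define, for each $i \in \{1,\ldots,p\}$, the function $\tilde{\psi}_i : \mathbb{R}^n \to \mathbb{R}^n$ by $\tilde{\psi}_i(x') = g(x')\psi_i$. Since $g$ is continuous and each $\psi_i$ is a constant vector, each $\tilde{\psi}_i$ is continuous, and $D(x') = \co\{\tilde{\psi}_1(x'),\ldots,\tilde{\psi}_p(x')\}$ has exactly the structure required by Theorem~\ref{mainTheorem}.

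Next, I would read off the QP constraint at the optimizer $u^*(x')$, which is
\begin{equation*}
\nabla h(x')^{\top}(f(x') + g(x')u^*(x')) \geq -\alpha(h(x')) - \min \nabla h(x')^{\top}\tilde{\Psi}(x'),
\end{equation*}
with $\tilde{\Psi}(x') = \{\tilde{\psi}_1(x'),\ldots,\tilde{\psi}_p(x')\}$. This is word-for-word inequality \eqref{eq:mainTheorem} applied to the $\tilde{\psi}_i$. Since $u^*$ is continuous by assumption, $\alpha$ is locally Lipschitz extended class-$\mathcal{K}$ by hypothesis, and $h$ is continuously differentiable, all the premises of Theorem~\ref{mainTheorem} are met. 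Applying that theorem to the closed-loop system yields that $h$ is a valid CBF for \eqref{eq:control-affine-disturbed}, as required.

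The main obstacle here is conceptual rather than technical: one must see past the surface difference between ``disturbance as a set of constant vectors multiplied pointwise by $g$'' and ``disturbance as a set of continuous vector fields,'' and identify the right continuous maps $\tilde{\psi}_i$ so that Theorem~\ref{mainTheorem} applies verbatim. The upper semi-continuity of $D$ and the reduction $\min \nabla h^{\top}\co\tilde{\Psi} = \min \nabla h^{\top}\tilde{\Psi}$, which would otherwise be the work of the proof, are already absorbed into Theorem~\ref{mainTheorem} via Lemma~\ref{lem:usc-disturbance} and \eqref{eq:minCo}. The one issue the proposition simply assumes away is the feasibility and continuous selection of the QP minimizer; a stronger version would require identifying conditions (e.g., a Slater-type interior condition on the half-space defined by the QP constraint) under which $u^*$ is guaranteed to exist and be continuous, but under the stated hypothesis no such work is needed.
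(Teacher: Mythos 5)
Your proof is correct and follows essentially the same route the paper intends: the paper likewise observes that each $x' \mapsto g(x')\psi_{i}$ is continuous (since $g$ is continuous and the $\psi_{i}$ are constant vectors), so the QP constraint evaluated at the assumed-continuous $u^{*}$ is exactly inequality \eqref{eq:mainTheorem}, and Theorem~\ref{mainTheorem} applies directly. Your closing remark about feasibility and constraint qualification matches the paper's own discussion following the proposition, so there is nothing to add.
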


Note that since each $x' \mapsto g(x')\psi_{i}(x')$ and $\nabla h$ are continuous, so is $x' \mapsto \min \nabla h(x')^{\top} g(x')\Psi(x')$ (see Remark~\ref{rem:min-continuous}).  Proposition~\ref{prop:controller-synthesis} assumes continuity of $u^{*}$; however, under continuity of $x' \mapsto \min \nabla h(x')^{\top} g(x')\Psi$, certain constraint-qualification assumptions lead to continuity of $u^{*}$.  As such, continuity of $u^{*}$ may be guaranteed under one of these conditions (e.g., see \cite{morris2013}).  Assuming we are given a nominal control $u_{\text{nom}}$ and a CBF $h$ for \eqref{eq:robDynamics}, this QP has the objective of minimally changing $u_{\text{nom}}$ such that $h$ is a valid CBF for \eqref{eq:robDynamics}.  In terms of application, we assume that the QP above is solved at each point in time $t \in [0, t_1]$. 

The QP contained in Proposition~\ref{prop:controller-synthesis} is strongly convex.  As such, most solvers admit a polynomial-time solution in the number of decision variables (i.e., control inputs) $O(m^{3})$.  Accounting for the disturbances only requires the computation of $\min \nabla h(\cdot)^{\top}g(\cdot)\Psi(\cdot)$, which is linear in the number of function from which $\Psi$ is composed and has a runtime of $O(p)$.  Thus, the overall runtime of the synthesis procedure in Proposition~\ref{prop:controller-synthesis} is $O(m^{3} + p)$.

\section{Robust Collision Avoidance for the Robotarium}  \label{sec:robust-collision-avoidance}

\begin{figure}[t]
    \centering
    \includegraphics[width=0.30\textwidth]{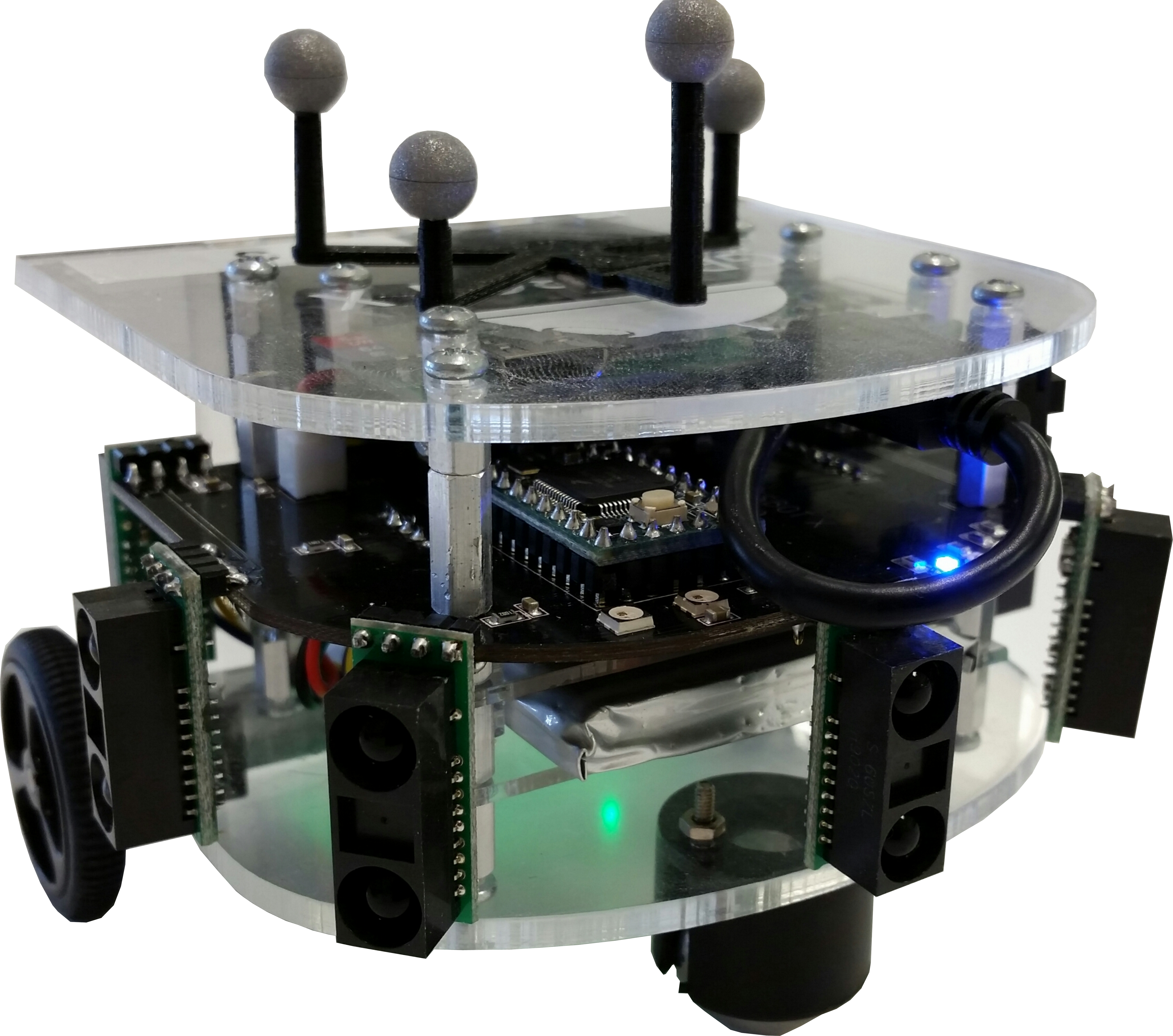}
    \caption{A picture of the GRITSbot, the differential-drive robot used by the Robotarium.}
    \label{fig:GRITSBOT}
\end{figure}

In this section, we specialize the robust CBFs detailed in Section~\ref{sec:barrier-functions-for-disturbed} and the controller-synthesis framework in Section~\ref{sec:controller-synthesis-via} for differential-drive robots for use in the Robotarium.  Specifically, the controller-synthesis procedure is formulated with respect to collision-avoidance CBFs.  For the sake of clarity, we omit the explicit dependence on time for brevity. 

Consider $N$ differential-drive robots, where each robot has state $x_{i} \in \mathbb{R}^3$ composed of its global position in the plane and heading
\begin{gather} 
    \label{eq:robDynamics}
    x_{i} \coloneqq 
    \begin{bmatrix}
     x_{i, 1} ~ x_{i, 2} ~ \theta_{i}
     \end{bmatrix}^{\top} .
\end{gather}
Each robot has dynamics
\begin{gather}
    \dot{x}_{i} \in 
    \begin{bmatrix}
        \cos \theta_{i} & 0 \\ 
        \sin \theta_{i} & 0 \\ 
        0 & 1
    \end{bmatrix}
    G(u_{i}(x_{i})+\co\Psi) ,\\
\end{gather}

where $$ \Psi = 
    \{
    \begin{bmatrix}
    \psi \\
    \psi
    \end{bmatrix},
    \begin{bmatrix}
    \psi \\
    -\psi
    \end{bmatrix},
        \begin{bmatrix}
    -\psi \\
    \psi
    \end{bmatrix},
        \begin{bmatrix}
    -\psi \\
    -\psi
    \end{bmatrix}
    \}, \psi \in \mathbb{R}_+.$$
The matrix $G$ is given by
\begin{gather}
    G =
    \begin{bmatrix}
         r/2 & r/2 \\
         -r/l_{b} &  r/l_{b}
    \end{bmatrix} ,
\end{gather}
where $r$ and $l_{b}$ are the wheel radius and the robot's base length respectively, and the differential-drive input is given by
\begin{gather}
    u_{i} =
    \begin{bmatrix}
         \omega^{R}_{i},~ \omega^{L}_{i} 
    \end{bmatrix}^{\top} ,
\end{gather}
where $\omega^{R}_{i}$ and $\omega^{L}_{i}$ are the right and left wheels' angular velocities, respectively. This geometric model, illustrated in Figure~\ref{fig:dd_diagram}, is representative of the control methodology of the Robotarium's robots shown in Figure~\ref{fig:GRITSBOT}, for which the control inputs are velocity commands to the DC motors of the robot.  As such, this formulation allows one to account for the actuation limits during controller synthesis.  

\begin{figure}[tb]
    \centering
    \includegraphics[width=0.40\textwidth]{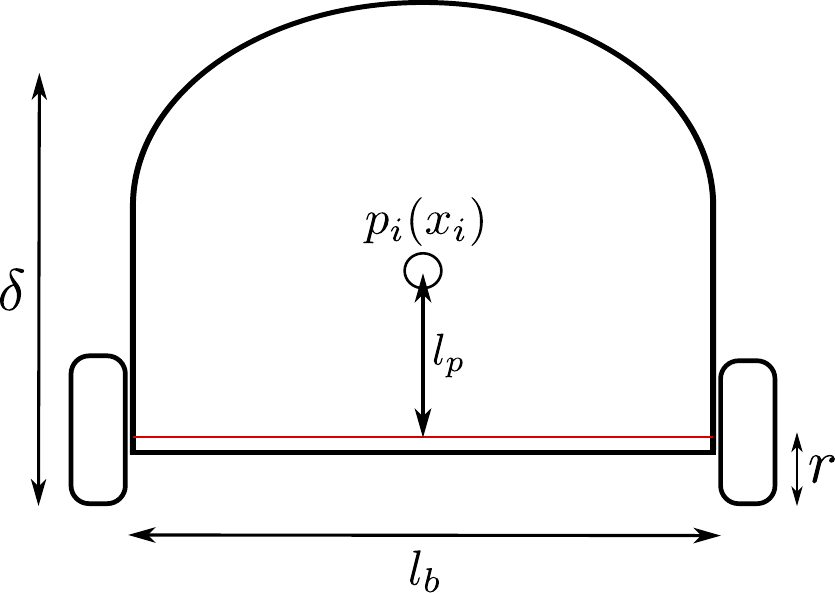}
    \caption{Diagram of the GRITSbot.  The symbol $l_b$ denotes the base length, $r$ the wheel radius, $l_p$ the projection distance, $\delta$ the diameter of the GRITSbot, and $p$ the center of the robot.  Note that $l_p$ projects the wheel-axle's center-line, denoted by the red-line, to the center-line of the robot.}
    \label{fig:dd_diagram}
\end{figure}

\begin{figure*}[tb]
\centering
\begin{minipage}{0.33\textwidth}
\includegraphics[width=\textwidth]{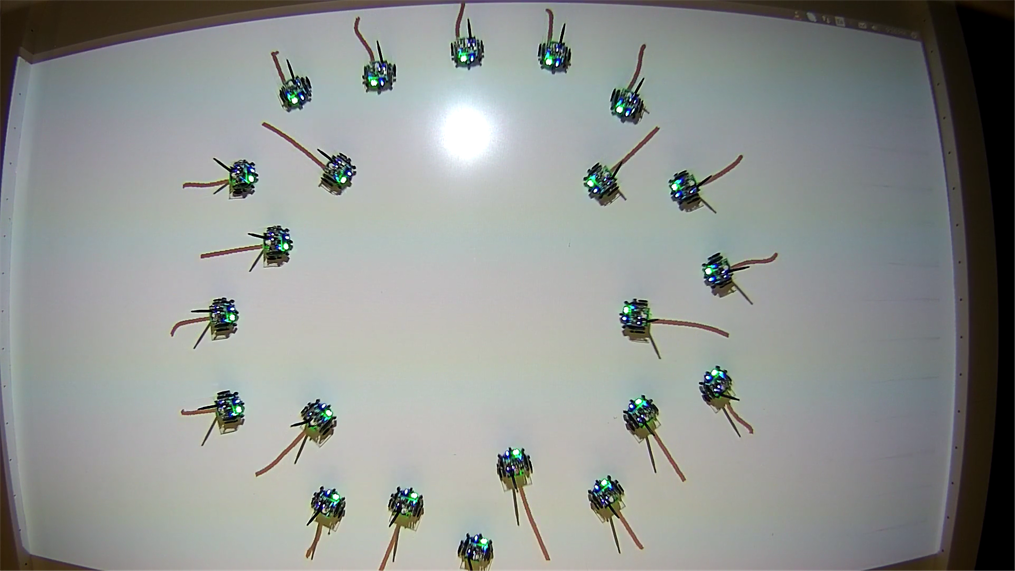}
\end{minipage}~%
\begin{minipage}{0.33\textwidth}
\includegraphics[width=\textwidth]{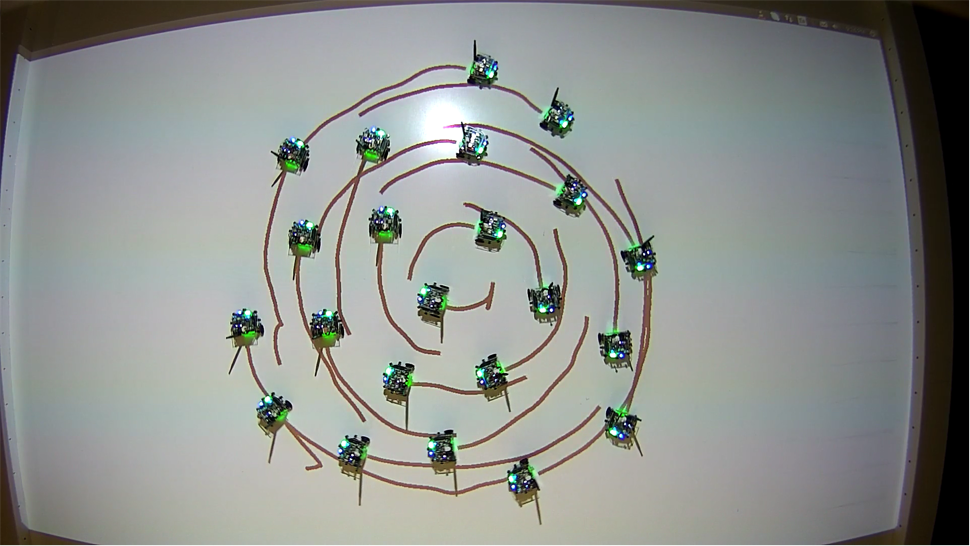}
\end{minipage}~%
\begin{minipage}{0.33\textwidth}
\includegraphics[width=\textwidth]{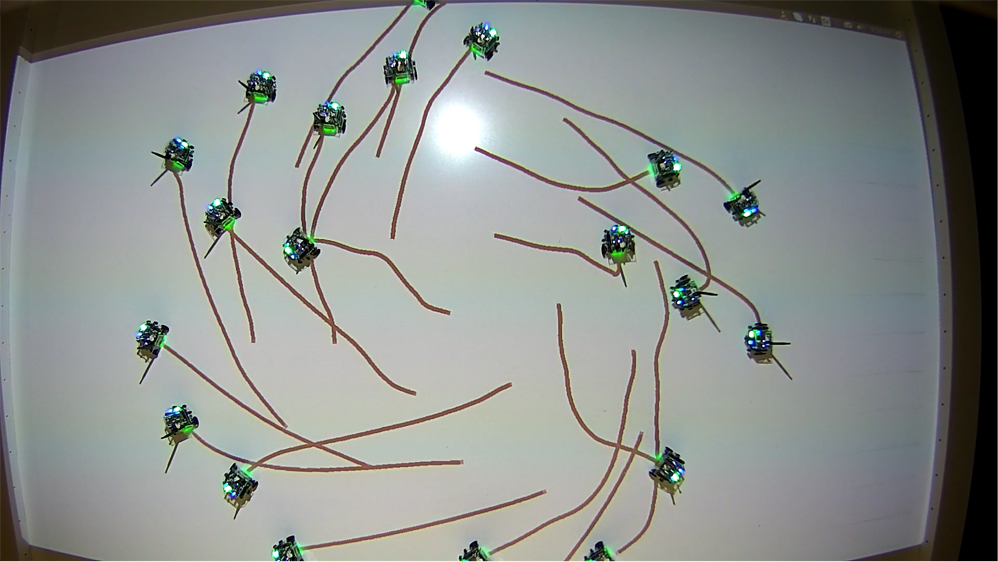}
\end{minipage}~%
\caption{In the Robotarium, a group of 22 GRITSbots complete an iteration of the repeated experiment detailed in Section~\ref{sec:experiments}.  The robots are initially arranged on a circle (left) and attempt to traverse to the opposite side (right).  The Robotarium utilizes the robust-CBF-based controller synthesis (see \eqref{eq:QP2}) to prevent collisions (middle) and ensure that each robot reaches the opposite side of the circle (right).}
\label{fig:xbarrier}
\end{figure*}

The Robotarium's differential-drive robots' wheels are located toward the back of the robot.  To permit the formulation of a collision-avoidance constraint from the centroid of the robot, we introduce the following output of the state.  This technique considers a point at a distance $l_{p} \geq 0$ ahead of the robot and orthogonal to the wheel axis.  In deployment, this parameter is chosen to lie on the centroid of the robot as shown in Figure~\ref{fig:dd_diagram}.  


In particular, define the output (i.e., the point ahead of the robot) $p_{i} : \mathbb{R}^{3} \to \mathbb{R}^{2}$ as
\begin{align}
    p_{i}(x_{i}) = 
    \begin{bmatrix}
        x_{i1} \\
        x_{i2}
    \end{bmatrix}
    + l_{p}
    \begin{bmatrix}
        \cos{\theta_i} \\
        \sin{\theta_i} 
    \end{bmatrix} .
\end{align}
Differentiating $p_{i}$ along the differential-drive dynamics yields that
\begin{gather} 
    \label{eq:pi_dot}
    \dot{p}_{i}(x_{i}) = g_i(x_{i})u_{i}(x_{i}) ,
\end{gather}
where
\begin{equation}
    g_{i}(x_{i}) = R_{i}(\theta_{i})LG,~ R_{i}(\theta_{i}) = 
    \begin{bmatrix}
        \cos \theta_{i} & -\sin \theta_{i} \\
        \sin \theta_{i} & \cos \theta_{i} 
    \end{bmatrix} 
\end{equation}
and
\begin{equation}
L = 
    \begin{bmatrix}
        1 & 0 \\ 
        0 & l_{p}
    \end{bmatrix} .
\end{equation}
Note that both $R_{i}(\cdot)$ and $L$ are always invertible, so choosing $p_{i}$ in this manner yields an invertible mapping between $\dot{p}_{i}$ and $u_{i}$.  For later convenience, define the ensemble variables
\begin{gather}
    u = 
    \begin{bmatrix}
        u^{\top}_1 & \ldots & u_N^{\top}
    \end{bmatrix}^{\top},~ %
    x = 
    \begin{bmatrix}
    x^{\top}_1 & \ldots & x_N^{\top}
    \end{bmatrix}^{\top} \\
   p = 
    \begin{bmatrix}
        p^{\top}_1 & \ldots & p_N^{\top}
    \end{bmatrix}^{\top}  
\end{gather}

Using the fact that $l_{p}$ is chosen so that $p_{i}$ is at the centroid of each robot, the following CBF encodes a collision-avoidance constraint between robots $i$ and $j$
\begin{equation} 
    \label{eq:collision-avoidance-cbf}
    h_{ij}(p(x)) = \norm{p_{i}(x_{i}) - p_{j}(x_{j})}^2 - \delta^{2} ,
\end{equation}
where $\delta > 0$ denotes the diameter of the robot (see Figure~\ref{fig:dd_diagram}).  Note that
\begin{equation}
    \nabla_{p_{i}} h_{ij}(p(x)) = (p_{i}(x_{i}) - p_{j}(x_{i})) = - \nabla_{p_{j}} h_{ij}(p(x)) . 
\end{equation}
`
The CBF in \eqref{eq:collision-avoidance-cbf} models each robot as a circle of diameter $\delta$. Again, note that the center of each robot's wheel axle is shifted from the center of the circle.  Conveniently, this issue can be easily mitigated by setting the look-ahead distance $l_p$ to map the point $p_i$ to the center of the robot. 

The barrier certificate that needs to be satisfied for each pair $(i, j)$ of robots is obtained via Proposition~\ref{prop:controller-synthesis}
\begin{align}
    \label{eq:barrier-certificate}
    & \nabla_{p_i} h_{ij}(p(x))^{\top}g_i(x)u_{i} + \nabla_{p_j} h_{ij}(p(x))^{\top}g_j(x)u_{j} \geq  \\
    & - \gamma h_{ij}(x)^3 \\
    & - \min{(\nabla_{p_i} h_{ij}(p(x))^{\top}g_i(x) + \nabla_{p_j} h_{ij}(p(x))^{\top}g_j(x))\Psi} ,
\end{align}
where $h_{ij}(x) \mapsto \gamma h_{ij}(x)^{3}$ is the extended class-$\mathcal{K}$ function, for some $\gamma > 0$.

Now, it remains to formulate \eqref{eq:barrier-certificate} as in a form conducive to an optimization program (i.e., as an inequality constraint).  It is then convenient to define a matrix-valued function $A$ and a vector-valued function $b$ so that the constraint can take the ensemble form $A(x)u \geq b(x)$.  We do so by defining the following row-vector $A_{(i,j)}(x)$ and scalar $b_{(i,j)}(x)$ for each pair $(i,j)$ of robots
\begin{align}
    & A_{(i,j)}(x) = 
    \begin{bmatrix}
        a^{(i,j)}_{1} & \ldots & a^{(i,j)}_{N}
    \end{bmatrix} \\
    & b_{(i,j)}(x) = - \gamma h_{ij}^3(x) \\
    & - \min{(\nabla_{p_i} h_{ij}(p(x))^{\top}g_i(x) + \nabla_{p_j} h_{ij}(p(x))^{\top}g_j(x))\Psi}
\end{align}
where $a^{(i,j)}_{k} \in \mathbb{R}^{1 \times 2}$ is defined w.r.t. each $A_{(i,j)}(x)$ as
\begin{equation}
    a^{(i,j)}_{k} = 
    \begin{cases}
    \nabla_{p_i} h_{ij}(p(x))^{\top}g_i(x), \text{ if } k = i \\
    \nabla_{p_j} h_{ij}(p(x))^{\top}g_j(x), \text{ if } k = j \\
    0, \text{ o.w.}
    \end{cases}
\end{equation}
Note that this process is repeated $\forall i \in \{1,\ldots,N-1\}$ and $\forall j \in \{i+1,\ldots,N\}$ to account for all possible pairwise-collisions, and the $A$ and $b$ matrices are then obtained by vertically stacking all the generated $A_{(i,j)}$ and $b_{(i,j)}$, respectively.  At last, by using all the variables introduced so far, we can formulate a similar QP to the one in Proposition~\ref{prop:controller-synthesis} as
\begin{align}
\label{eq:QP2}
    u^*=\underset{u \in \mathbb{R}^m}{\text{argmin.}} \quad & ||L_cG_c(u_{\text{nom}}(x) - u)||^2 \\
    \text{s.t.} \quad  & \left \| u \right \|_{\infty}  \leq u_{\text{max}}\\
    & A(x)u \geq b(x) , 
\end{align}
where 
 \begin{align}
    & G_c = I_{N} \otimes G,~ L_c =  I_{N} \otimes L ,
\end{align}
where $\otimes$ denotes the Kronecker product.  This formulation ensures that the barrier functions minimally alter $u_{\text{nom}}$ to render the input safe while taking actuation limits into account. The matrix $G_c$ is introduced to transform the differential-drive inputs to linear and angular velocities, and $L_c$ is a weighting matrix that alleviates dead-lock situations by encouraging alterations in the angular velocities rather than the linear velocities.  To account for actuator limits, \eqref{eq:QP2} includes the constraint $\|u\| \leq u_{\text{max}}$, which does not change results of Proposition~\ref{prop:controller-synthesis}.  In the next section, we present extensive experimentation demonstrating how the robust CBF formulation decreases the number of collisions during autonomous operation of the Robotarium.

\section{Experiments} 
\label{sec:experiments}

To highlight the fact that the work in this paper is a milestone toward achieving autonomy in the Robotarium, we present the results yielded by extended experimentation. This scenario showcases how the robust CBF formulation drastically decreases the frequency of collisions at almost no additional computational cost. 

The setup is as follows. The experiment utilizes $22$ GRITSbots, and their parameterisation is shown in \autoref{tab:parameters}.  In the case of the non-robust CBF experiment, $\psi$ is simply changed to $0$. We run an experiment repeatedly for a given period of time using the non-robust CBF formulation to serve as a base case.  Then, the same experiment is run repeatedly using the robust CBF formulation, and we compare the frequency of collisions in both experiments.  The initialization of the experiment involves driving the $22$ GRITSbots to a circular formation as shown in \autoref{fig:xbarrier}.  Upon the termination of the initialization, each robot is commanded to drive to the opposite end of the circle relative to their current position through the use of a proportional controller. Note that this procedure results in all the robots' paths crossing at the center of the circle as highlighted in \autoref{fig:xbarrier}.
This experiment is then run repeatedly for an extended period for each of the two formulations.

\begin{table}[tb]
    \centering
    \begin{tabular}{|c|c|c|c|c|c|c|}
        \hline
       $l_p$ (m)  & $l_b$ (m) & $r$ (m) & $\delta$ (m) & $\gamma$ & $u_{\max}$ (rad/s) & $\psi$ (rad/s)  \\
       \hline
        $0.03$& $0.105$ & $0.016$& $0.12$& $150$ &  $25$ & $5$\\
       \hline
    \end{tabular}
    \caption{Values of the relevant GRITSbot's and experiments' parameters.}
    \label{tab:parameters}
\end{table}

\begin{table}[tb]
    \centering
    \begin{tabular}{|c|c|c|}
        \hline
        & Robust CBF & Non-Robust CBF \\
        \hline
        \hline
       Avg. WCT (ms) & $4.473$ & $4.048$ \\
       \hline
       Var. of WCTs ($\text{ms}^2$) & $5.793$ &  $3.435$\\
       \hline
       Avg. Freq. (Hz)  & $222$ & $247$ \\
       \hline
       Time Violated (s) & $0$ & $138$ \\
       \hline
    \end{tabular}
    \caption{Comparison of the Wall-Clock Times (WCTs) for solving the Quadratic Programs with and without the robust CBF formulation. The last entry is the duration during which the constraint was violated for each experiment.}
    \label{tab:wct}
\end{table}

\begin{figure*}[tb]
\centering
\begin{minipage}{0.5\linewidth}
   \includegraphics[width=1\textwidth]{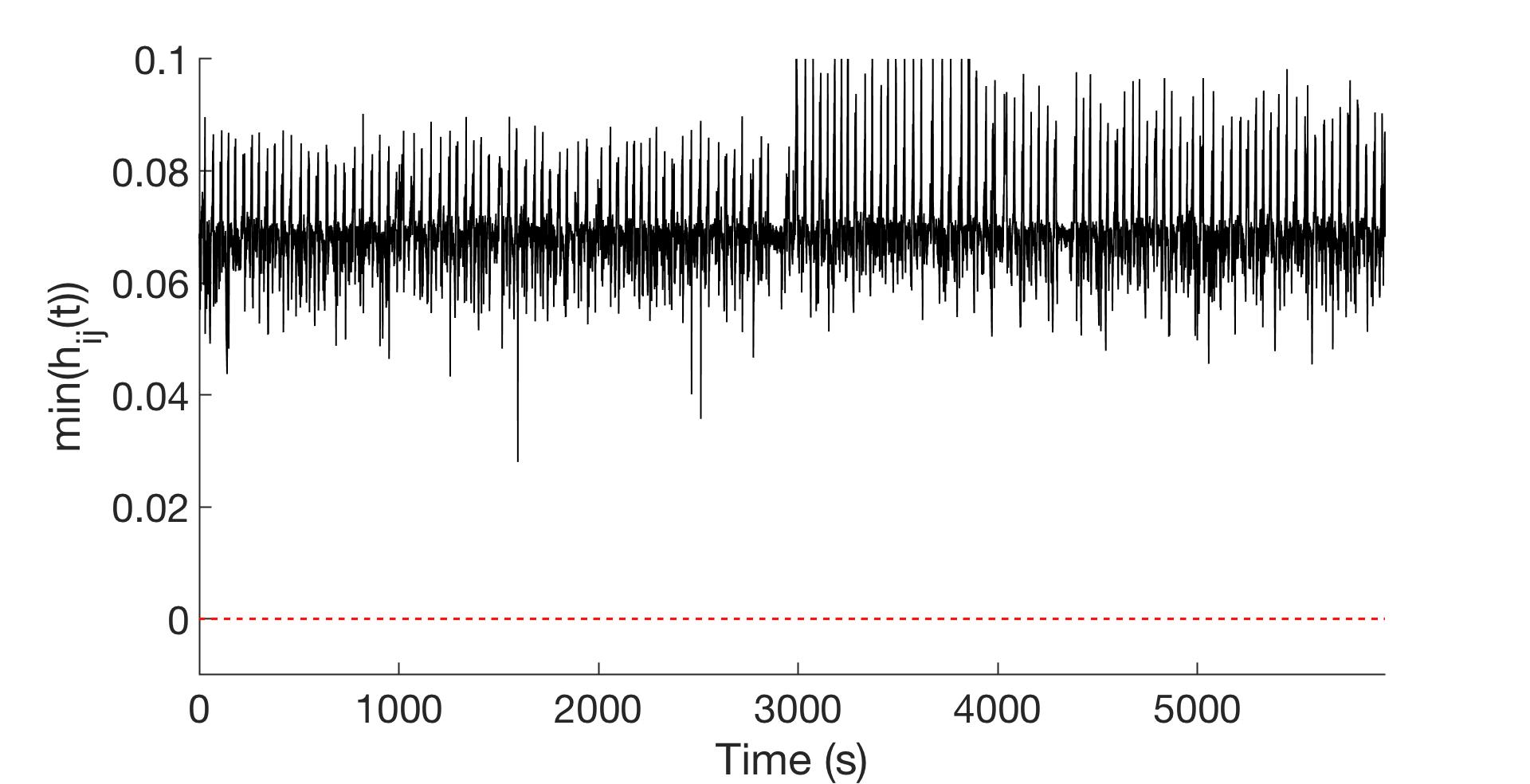}
\end{minipage}~%
\begin{minipage}{0.5\linewidth}
   \includegraphics[width=1\textwidth]{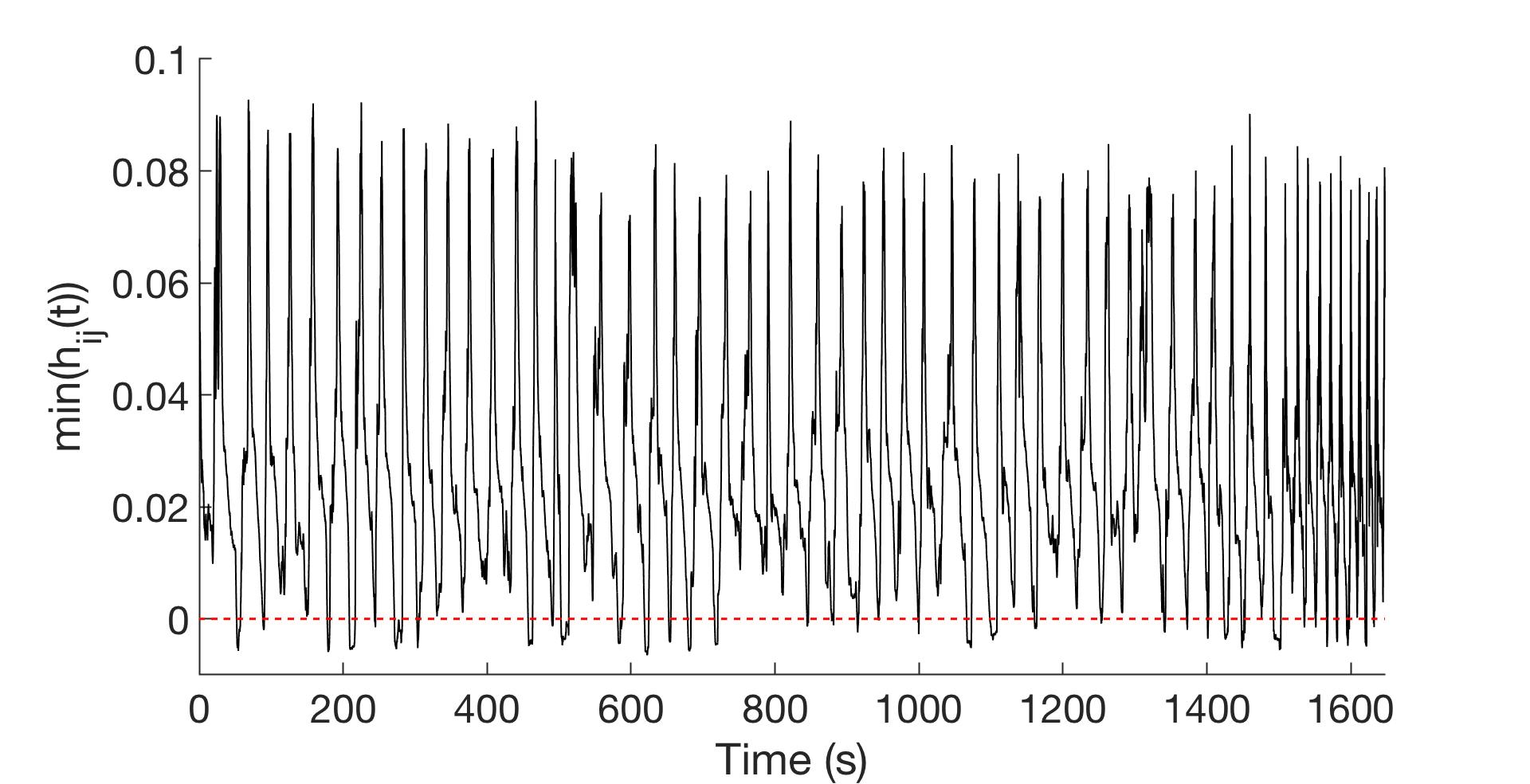}
\end{minipage} 
\caption{Plots of $\min(h_{ij}(t))$ over time for the experiments with and without the robust CBF formulation (left and right, respectively).  At each point in time, if the value $\min(h_{ij}(t))$ is below the dashed line (i.e. value is negative) then the collision-avoidance constraint is violated.  The robust CBF formulation (left) encounters 0 collisions whereas the non-robust CBF formulation (right) periodically encounters collisions as shown in \autoref{tab:wct}.}
\label{fig:minh}
\end{figure*}

The metrics upon which we compare the performance of the non-robust and robust CBF formulations is the frequency of collisions occurring in each experiment and the wall-clock times associated with solving each formulation's QP. Specifically, we record the minimum $h_{ij}$ value at each time step and check if it is negative (i.e. an occurrence of a collision).  The plot of the minimum value of $h_{ij}$ over time is shown in \autoref{fig:minh}. It is clear by inspection that $0$ collisions occur during the robust CBF formulation. On the other hand, \autoref{fig:minh} displays the result of the non-robust formulation where the minimum value of $h_{ij}$ frequently drops below $0$. This violation mainly occurs when the robots are clustered near the center of the circle.

Focusing on the run time, solving the robust CBF QP averaged a wall-clock time of $4.5$~ms, translating to a frequency of approximately $220$~Hz, resulting in only a $25$~Hz decrease compared to solving the non-robust CBF QP.  The reason the frequency decreases slightly is that the only computation added is a $\min$ operation over $p$ values, which is linear with respect to $p$ ($O(p)$), where $p$ is the number of points forming the convex hull of the disturbance.  The comparison of the wall-clock times of both experiments is shown in \autoref{tab:wct}.

This experiment demonstrates how the robust CBF formulation drastically reduces the number of collisions during the Robotarium's operation, which in turn aids in the process of fully automating the Robotarium.  For example, in its current state, a human operator must manually flag an experiment for re-execution if a collision occurs.  The robust CBFs decrease the need for this human operator and increase the throughput of successful experiments via the reduction of collisions.

\section{Conclusion}
\label{sec:conclusion}

This paper extended barrier functions to control systems with an additive disturbance, resulting in robust control barrier functions.  Moreover, this work showed that checking a finite number of points actually accounts for a wide class of disturbances.  As such, these robust control barrier functions remain amenable to controller synthesis via quadratic programming.  Experimental results showcased the efficacy of the approach in a long-term experiment in the Robotarium.

 
\bibliographystyle{IEEEtran}
\bibliography{IEEEabrev,citations}

\end{document}